\newtheorem{definition}{Definition}
\newtheorem{problem}{Problem}
\newtheorem{theorem}{Theorem}
\newtheorem{lemma}[theorem]{Lemma}
\newtheorem{example}{Example}
\newcommand*{\mydprime}{^{\prime\prime}\mkern-1.2mu}
\newcommand{\Next}{\bigcirc}
\newcommand{\Always}{\Box}
\newcommand{\Event}{\diamondsuit}
\newcommand{\Implies}{\Rightarrow}
\newcommand{\Then}{\mathcal{T}}
\newcommand{\True}{\top}
\newcommand{\argmax}{\textrm{argmax}}
\newcommand{\mami}{\textrm{mami}}
\newcommand{\ignore}[1]{%
}
\begin{document}

\title{A Policy Search Method For Temporal Logic Specified Reinforcement Learning Tasks}

\author{Xiao Li, Yao Ma and Calin Belta}



%

\normalem 

\maketitle

\begin{abstract}
Reward engineering is an important aspect of reinforcement learning. Whether or not the users' intentions can be correctly encapsulated in the reward function can significantly impact the learning outcome. Current methods rely on manually crafted reward functions that often requires parameter tuning to obtain the desired behavior. This operation can be expensive when exploration requires systems to interact with the physical world. In this paper, we explore the use of {\em temporal logic} (TL) to specify tasks in reinforcement learning. TL formula can be translated to a real-valued function that measures its level of satisfaction against a trajectory. We take advantage of this function and propose {\em temporal logic policy search} (TLPS), a model-free learning technique that finds a policy that satisfies the TL specification. A set of simulated experiments are conducted to evaluate the proposed approach.  
\end{abstract}

\IEEEpeerreviewmaketitle

\section{Introduction}
\label{sec:intro}
Reinforcement learning (RL) has enjoyed groundbreaking success in recent years ranging from playing Atari games at super-human level~\cite{Mnih2015}, playing competitively with world champions in the game of Go \cite{Silver2016} to generating visuomotor control policies for robots \cite{Levine2015}, \cite{Levine2016}. Despite much effort being put into developing sample efficient algorithms, an important aspect of RL remains less explored. The reward function is the window for designers to specify the desired behavior and impose important constraints for the system. While most reward functions used in the current RL literature have been  based on heuristics for relatively simple tasks, real world applications typically involve tasks that are logically more complex.

Commonly adopted reward functions take the form of a linear combination of basis functions (often quadratic)~\cite{gu2016deep}. This type of reward function has limited expressibility and is semantically ambiguous because of its dependence on a set of weights. Reward functions of this form have been used to successfully learn high dimensional control tasks such as humanoid walking~\cite{peters2003reinforcement} and multiple household tasks (e.g. placing coat-hangers, twisting bottle caps, etc)~\cite{Levine2015}. However, parameter tuning of the reward function is required and this iteration is expensive for robotic tasks. Moreover, these tasks are logically straightforward in that there is little logical interactions between sub-tasks (such as sequencing, conjunction/disjunction, implication, etc). 

Consider the harder task of learning to use an oven. The agent is required to perform a series of sub-tasks in the correct sequence (set temperature and timer $\rightarrow$ preheat $\rightarrow$ open oven door $\rightarrow$ place item in oven $\rightarrow$ close oven door). In addition, the agent has to make the simple decision of when to open the oven door and place the item (i.e. preheat finished \textit{implies} open oven door). Tasks like this are commonly found in household environments (using the microwave, refrigerator or even a drawer) and a function that correctly maps the desired behavior to a real-valued reward can be difficult to design. If the semantics of the reward function can not be guaranteed, then an increase in the expected return will not necessarily represent better satisfaction of the task specification. This is referred to as reward hacking by~\cite{Amodei2016}.

Reward engineering has been briefly explored in the reinforcement learning literature. Authors of \cite{dewey2014reinforcement} and \cite{arel2012threat} provide general formalisms for reward engineering and discuss its significance. Authors of~\cite{Ng1999} proposed potential-based reward shaping and proved policy invariance under this type of reward transformation.  Another line of work aims to infer a reward function from demonstration. This idea is called inverse reinforcement learning and is explored by \cite{Ng2000} and \cite{sermanet2016unsupervised}.

In this paper, we adopt the expressive power of temporal logic and use it as a task specification language for reinforcement learning in continuous state and action spaces. Its quantitative semantics (also referred to as robustness degree or simply robustness) translate a TL formula to a real-valued function that can be used as the reward. By definition of the quantitative semantics, a robustness value of greater than zero \textit{guarantees} satisfaction of the temporal logic specification.

Temporal logic (TL) has been adopted as the specification language for a wide variety of control tasks. Authors of~\cite{leahy2016persistent} use linear temporal logic (LTL) to specify a persistent surveillance task carried out by aerial robots. Similarly, \cite{sadraddini2016provably} and \cite{coogan2016traffic} applied LTL in traffic network control. Application of TL in reinforcement learning has been less investigated. \cite{Aksaray2016} combined signal temporal logic (STL) with Q-learning while also adopting the log-sum-exp approximation of robustness. However, their focus is in the discrete state and action spaces, and ensured satisfiability by expanding the state space to a history dependent state space. This does not scale well for large or continuous state-action spaces which is often the case for control tasks. 

Our main contributions in this paper are as follows:
\begin{itemize}
\item we present a model-free policy search algorithm, which we call temporal logic policy search (TLPS), that takes advantage of the robustness function to facilitate learning. We show that an optimal parameterized policy that maximizes the robustness could be obtained by solving a constrained optimization,
\item a smoothing approximation of the robustness degree is proposed which is necessary for obtaining the gradients of the objective and constraints. We prove that using the smoothed robustness as reward provides similar semantic guarantees to the original robustness definition while providing significant speedup in learning, 
\item  finally, we demonstrate the performance of the proposed approach using simulated navigation tasks.  
\end{itemize}



\section{Preliminaries}

\subsection{Truncated Linear Temporal Logic (TLTL)}
\label{sec:tltl}
In this section, we provide definitions for TLTL (refer to our previous work~\cite{li2016reinforcement} for a more elaborate discussion of TLTL).  A TLTL formula is defined over predicates of form $f(s) < c$,
where $f: {\rm I\!R}^n \rightarrow {\rm I\!R}$ is a function of state and $c$ is a constant. We express the task as a TLTL formula with the following syntax:
\begin{equation}
\begin{split}
\phi := \ & \True \,\,|\,\, f(s) < c \,\,| \,\, \neg \phi \,\,|\,\, \phi \wedge \psi \,\,|\,\, \phi \vee \psi \,\,|\,\, \\
        & \Event \phi \,\,|\,\, \Always \phi \,\,|\,\, \phi \, \mathcal{U} \, \psi \,\,|\,\, \phi\, \Then\, \psi \,\,|\,\, \Next \phi \,\,|\,\, \phi \Implies \psi,
\end{split}
\end{equation}
where $\True$ is the boolean constant true, $f(s) < c$ is a predicate,
$\neg$~(negation/not), $\wedge$~(conjunction/and), and $\vee$~(disjunction/or) are
Boolean connectives,
and $\Event$~(eventually), $\Always$~(always), $\mathcal{U}$~(until), $\Then$~(then), $\Next$~(next),
are temporal operators.
Implication is denoted by $\Implies$~(implication). TLTL formulas are evaluated against finite time sequences of states $\{s_{0},s_{1},\ldots,s_{T}
\}$. 

We denote $s_t \in S$ to be the state at time $t$, and $s_{t:t+k}$ to be a sequence of states
(state trajectory) from time $t$ to $t+k$, i.e., $s_{t:t+k}=(s_t, s_{t+1}, ..., s_{t+k})$. The Boolean semantics of TLTL is defined as:
\begin{alignat*}{3}
&s_{t:t+k} \models f(s)<c \quad &&\Leftrightarrow \quad &&f(s_t) <c, \\
&s_{t:t+k} \models \neg \phi \quad &&\Leftrightarrow \quad &&\neg(s_{t:t+k}\models \phi),\\
&s_{t:t+k} \models \phi \Rightarrow \psi  \quad &&\Leftrightarrow \quad && (s_{t:t+k} \models \phi) \Rightarrow (s_{t:t+k} \models \psi),\\
&s_{t:t+k} \models \phi \wedge \psi \quad &&\Leftrightarrow \quad && (s_{t:t+k} \models \phi) \wedge (s_{t:t+k} \models \psi),\\
&s_{t:t+k} \models \phi \vee \psi \quad &&\Leftrightarrow \quad && (s_{t:t+k} \models \phi) \vee (s_{t:t+k} \models \psi),\\
&s_{t:t+k} \models \Next \phi  \quad &&\Leftrightarrow \quad && (s_{t+1:t+k} \models \phi) \wedge (k>0), \\
&s_{t:t+k} \models \Always \phi \quad &&\Leftrightarrow \quad && \forall t^\prime \in [t,t+k) \  s_{t^\prime:t+k} \models \phi,\\
&s_{t:t+k} \models \Event \phi \quad &&\Leftrightarrow \quad && \exists t^\prime \in [t,t+k) \ s_{t^\prime:t+k} \models \phi,\\
&s_{t:t+k} \models \phi \,\, \mathcal{U} \,\, \psi \quad &&\Leftrightarrow \quad && \exists t^\prime \in [t,t+k) \,\,s.t.\,\, s_{t^\prime:t+k} \models \psi \\
&\,&&\,&& \wedge (\forall t^{\prime\prime} \in [t,t^\prime) \ s_{t^{\prime\prime}:t^\prime} \models \phi),\\
&s_{t:t+k} \models \phi \,\, \mathcal{T}\,\, \psi \quad &&\Leftrightarrow \quad && \exists t^\prime \in [t,t+k) \,\,s.t.\,\, s_{t^\prime:t+k} \models \psi \\
&\,&&\,&& \wedge (\exists t^{\prime\prime} \in [t,t^\prime) \ s_{t^{\prime\prime}:t^\prime} \models \phi).
\end{alignat*}
Intuitively, state trajectory $s_{t:t+k}\models \Always \phi$ (reads $s_{t:t+k}$ satisfies $\Always \phi$) if the specification defined by
$\phi$ is satisfied for every subtrajectory $s_{t^\prime:t+k},\,\,t^\prime \in [t,t+k)$.
Similarly, $s_{t:t+k}\models \Event \phi$ if $\phi$ is satisfied for at least one subtrajectory
$s_{t^\prime:t+k},\,\,t^\prime \in [t,t+k)$.
$s_{t:t+k}\models \phi \,\, \mathcal{U} \,\, \psi$ if $\phi$ is satisfied at every time step before
$\psi$ is satisfied, and $\psi$ is satisfied at a time between $t$ and $t+k$.
$s_{t:t+k}\models \phi \,\, \Then \,\, \psi$ if $\phi$ is satisfied at least once before
$\psi$ is satisfied between $t$ and $t+k$.
A trajectory $s$ of duration $k$ is said to satisfy formula $\phi$ if $s_{0:k} \models \phi$.

TLTL is equipped with quantitative semantics (robustness degree)
, i.e., a real-valued function $\rho(s_{t:t+k}, \phi)$ that indicates how far $s_{t:t+k}$ is from satisfying or
violating the specification $\phi$. We define the task satisfaction measurement $\rho(\tau,\phi)$
, which is recursively expressed as:
\begin{alignat*}{3}
&\rho(s_{t:t+k}, \True)\quad && = \quad && \rho_{max},\\
&\rho(s_{t:t+k},f(s_t)<c) \quad && = \quad &&c-f(s_t),\\
&\rho(s_{t:t+k},\neg \phi) \quad && = \quad &&-\rho(s_{t:t+k},\phi),\\
&\rho(s_{t:t+k}, \phi \,\, \Rightarrow \psi) \quad && = \quad && \max(-\rho(s_{t:t+k}, \phi), \rho(s_{t:t+k}, \psi))\\
&\rho(s_{t:t+k},\phi_1\wedge \phi_2) \quad &&= \quad &&\min(\rho(s_{t:t+k},\phi_1),\rho(s_{t:t+k},\phi_2)), \\
&\rho(s_{t:t+k},\phi_1\vee \phi_2) \quad &&= \quad &&\max(\rho(s_{t:t+k},\phi_1),\rho(s_{t:t+k},\phi_2)),\\
&\rho(s_{t:t+k}, \Next \phi) \quad && = \quad && \rho(s_{t+1:t+k},\phi) \,\,(k>0), \\
&\rho(s_{t:t+k},\Always \phi) \quad &&= \quad && \underset{t^{\prime} \in [t,t+k)}{\min}(\rho(s_{t^{\prime}:t+k},\phi)),\\
&\rho(s_{t:t+k},\Event \phi) \quad &&= \quad && \underset{t^{\prime} \in [t,t+k)}{\max}(\rho(s_{t^{\prime}:t+k},\phi)),\\
&\rho(s_{t:t+k},\phi \,\, \mathcal{U} \,\, \psi) \quad && = \quad && \underset{t^{\prime} \in [t,t+k)}{\max}( \min (\rho(s_{t^{\prime}:t+k},\psi), \\
& \, &&\, && \underset{t^{\mydprime} \in [t,t^{\prime})}{\min}\rho(s_{t\mydprime:t^\prime},\phi))),\\
&\rho(s_{t:t+k},\phi \,\, \Then \,\, \psi) \quad && = \quad && \underset{t^{\prime} \in [t,t+k)}{\max}( \min (\rho(s_{t^{\prime}:t+k},\psi), \\
& \, &&\, && \underset{t^{\mydprime} \in [t,t^{\prime})}{\max}\rho(s_{t\mydprime:t^\prime},\phi))),
\end{alignat*}
where $\rho_{max}$ represents the maximum robustness value.
Moreover, $\rho(s_{t:t+k},\phi) > 0 \Rightarrow s_{t:t+k} \models \phi$ and
$\rho(s_{t:t+k},\phi) < 0 \Rightarrow s_{t:t+k} \not\models \phi$,
which implies that the robustness degree can substitute Boolean semantics in order to enforce
the specification $\phi$. \newline 

\begin{example}
Consider
specification $\phi = \Event(s> 5 \wedge s < 10)$, where $s$ is a one dimensional state. Intuitively, this formula specifies that $s$ eventually reaches region $(5,10)$ for at least one time step.
Suppose we have a state trajectory $s_{0:3}=s_0s_1s_2=[11,6, 7]$ of horizon 3.
The robustness is
$\rho(s_{0:3}, \phi) = \underset{t \in [0,3)}{\max}\Big (\min(10 - s_t, s_t - 5)\Big) = \max(-1,1, 2) = 2$. Since $\rho(s_t, \phi) > 0$, $s_{0:1} \models \phi$ and the value $\rho(s_t, \phi) = 2$ is a measure of the satisfaction margin. Note that both states $s_1$ and $s_2$ stayed within the specified region, but $s_2$ "more" satisfies the predicate $(s> 5 \wedge s < 10)$ by being closer to the center of the region and thereby achieving a higher robustness value than $s_1$. 
\end{example}

\subsection{Markov Decision Process}
\label{sec:MDP}

In this section, we introduce the finite horizon infinite Markov decision process (MDP) and the semantics of a TLTL formula over an MDP. We start with the following definition:

\begin{definition}
\label{def:2}
A finite horizon infinite MDP is defined as a tuple $\langle S,A,p(\cdot|\cdot,\cdot)\rangle$, where $S\subseteq {\rm I\!R}^n$ is the continuous state space; $A \subseteq {\rm I\!R}^m$ is the continuous action space; $p(s^{\prime}|s,a)$ is the conditional probability density of taking action $a \in A$ at state $s \in S$ and ending up in state $s^{\prime} \in S$. We denote $T$ as the horizon of MDP.  
\end{definition}
 
Given an MDP in Definition~\ref{def:2}, a state trajectory of length $T$ (denoted $\tau=s_{0:T-1} = (s_0, ..., s_{T-1})$) can be produced. The semantics of a TLTL formula $\phi$ over $\tau$ can be evaluated with the robustness degree $\rho(\tau, \phi)$ defined in the previous section. $\rho(\tau, \phi) > 0$ implies that $\tau$ satisfies $\phi$, i.e. $\tau \models \phi$ and vice versa. In the next section, we will take advantage of this property and propose a policy search method that aims to maximize the expected robustness degree.

\section{Problem Formulation And Approach}
\label{sec:problem_formulation}
We first formulate the problem of policy search with TLTL specification as follows:
\begin{problem}
\label{def:1}

Given an MDP in Definition~\ref{def:2} and a TLTL formula $\phi$, find a stochastic policy $\pi(a|s)$ ($\pi$ determines a probability of taking action $a$ at state $s$) that maximizes the expected robustness degree 

\begin{equation}
\label{eq:2}
\pi^\star = \underset{\pi}{\arg\max} \, E_{p^\pi(\tau)}\left[ \rho(\tau, \phi)\right],
\end{equation}
where the expectation is taken over the trajectory distribution $p^\pi(\tau)$ following policy $\pi$, i.e.

\begin{equation}
\label{eq:11}
p^\pi(\tau) = p(s_0)\prod_{t=0}^{T-1}p(s_{t+1}|s_t,a_t)\pi(a_t|s_t).
\end{equation} \newline

\end{problem}

In reinforcement learning, the transition function $p(s^{\prime}|s,a)$ is unknown to the agent. The solution to Problem \ref{def:1} learns a stochastic time-varying policy $\pi(a_t|s_t)$ \cite{Deisenroth2011} which is a conditional probability density function of action $a$ given current state $s$ at time step $t$. 

In this paper, policy $\pi$ is a parameterized policy $\pi(a_t|s_t;\theta_t),\forall t=1,\ldots,T$ (also written as $\pi_\theta$ in short, where $\theta=\{\theta_0,\theta_1,\ldots,\theta_{T-1}\}$) is used to represent the policy parameter. The objective defined in Equation~\eqref{eq:2} then becomes finding the optimal policy parameter $\theta^*$ such that

\begin{equation}
\label{eq:3}
\theta^\star = \underset{\theta}{\arg\max} \, E_{p^{\pi_\theta}(\tau)}\left[ \rho(\tau,\phi)\right].
\end{equation}

To solve Problem~\ref{def:1}, we introduce temporal logic policy search (TLPS) - a model free RL algorithm. At each iteration, a set of sample trajectories are collected under the current policy. Each sample trajectory is updated to a new one with higher robustness degree by following the gradient of $\rho$ while also keeping close to the sample so that dynamics is not violated. A new trajectory distribution is fitted to the set of updated trajectories. Each sample trajectory is then assigned a weight according to its probability under the updated distribution. Finally, the policy is updated with weight maximum likelihood. This process ensures that each policy update results in a trajectory distribution with higher expected robustness than the current one. Details of TLPS will be discussed in the next section. 

As introduced in Section~\ref{sec:tltl}, the robustness degree $\rho$ consists of embedded $\max/\min$ functions and calculating the gradient is not possible. In Section~\ref{sec:smoothing}, we discuss the use of \textit{log-sum-exp} to approximate the robustness function and provide proofs of some properties of the approximated robustness.

\section{Temporal Logic Policy Search (TLPS)}
\label{sec:tlps}

Given a TLTL formula $\phi$ over predicates of $S$, TLPS finds the parameters $\theta$ of a parametrized stochastic policy $\pi_\theta(a|s)$ that maximizes the following objective function.
\begin{equation}
\label{eq:6}
J^{\pi_\theta} = E_{p^{\pi_\theta}} [\rho(\tau,\phi)], \,\, (T < \infty),
\end{equation}
\noindent where $p^{\pi_\theta}=p^{\pi_\theta}(\tau)$ is defined in Equation~\eqref{eq:11}.

In TLPS, we model the policy as a time-varying linear Gaussian, i.e.  $\pi(a_t|s_t)=\mathcal{N}(K_ts_t+k_t,C_t)$ where $K_t, k_t, C_t$ are the feedback gain, feed-forward gain and covariance of the policy at time $t$. (similar approach has been adopted in~\cite{chebotar2016path},~\cite{montgomery2016guided}). And the trajectory distribution in Equation~\eqref{eq:11} is modeled as a Gaussian $p^{\pi_\theta}(\tau)=\mathcal{N}(\tau | \mu_\tau, \Sigma_\tau)$ where $\mu_\tau=(\mu_{s_0},...,\mu_{s_T})$ and $\Sigma_\tau=diag(\Sigma_{s_0},...,\Sigma_{s_T})$.

At each iteration, $N$ sample trajectories are collected (denoted $\tau^i$, $i \in [1,N]$). For each sample trajectory $\tau^i$, we find an updated trajectory $\bar{\tau}^i$ by solving

\begin{equation}\label{tlps-eq-1}
\underset{\bar{\tau}^i}{\max}\,\hat{\rho}(\bar{\tau}^i,\phi), \,\, s.t.\,\,(\bar{\tau}^i - \tau^i)^T(\bar{\tau}^i - \tau^i) < \epsilon.
\end{equation}
\noindent In the above equation, $\hat{\rho}$ is the \textit{log-sum-exp} approximation of $\rho$. This is to take advantage of the many off-the-shelf nonlinear programming methods that require gradient information of the Lagrangian (sequential quadratic programming is used in our experiments). Using the log-sum-exp approximation we can show that its approximation error is bounded. In additional, the local ascending directions on the approximated surface coincide with the actual surface given mild constraints (these will be discussed in more detail in the next section).  Equation~\eqref{tlps-eq-1} aims to find a new trajectory that achieves higher robustness. The constraint is to limit the deviation of the updated trajectory from the sample trajectory so the system dynamics is not violated. 

After we obtain a set of updated trajectories, an updated trajectory distribution $\bar{p}(\tau)=\mathcal{N}(\tau | \bar{\mu}_\tau, \bar{\Sigma}_\tau)$ is fitted using 

\begin{equation}
\label{eq:12}
\bar{\mu}_{\tau} = \frac{1}{N}\sum_{i=1}^N \bar{\tau}^i, \,\, \bar{\Sigma}_\tau = \frac{1}{N}\sum_{i=1}^N (\bar{\tau}^i - \bar{\mu}_\tau)(\bar{\tau}^i - \bar{\mu}_\tau)^T,
\end{equation}

\noindent The last step is to update the policy. We will only be updating the feed-forward terms $k_t$ and the covariance $C_t$. The feedback terms $K_t$ is kept constant (the policy parameters are $\theta_t = (k_t, C_t)$, $t \in [0,T)$). This significantly reduces the number of parameters to be updated and increases the learning speed. For each sample trajectory, we obtain its probability under $\bar{p}(\tau)$

\begin{equation}
p(\tau^i) = \mathcal{N}(\tau^i |\bar{\mu}_{\tau}, \bar{\Sigma}_{\tau})
\end{equation}

\noindent ( $p(\tau^i)$ is also written in short as $p^i$) where $i \in [1,N]$ is the sample index. Using these probabilities, a normalized weighting for each sample trajectory is calculated using the softmax function $w^i = e^{\alpha p^i}/\sum_{i=1}^N e^{\alpha p^i}$ ($\alpha>0$ is a parameter to be tuned). Finally, similar to~ \cite{chebotar2016path}, the policy is updated using weighted maximum likelihood by

\begin{equation}
\begin{split}
&k_t^\prime = \sum_{i=1}^N w^i k_t^i \\
&C_t^\prime = \sum_{i=1}^N w^i (k_t^i - k_t^\prime)(k_t^i - k_t^\prime)^T.
\end{split}
\end{equation}
\noindent  According to~\cite{stulp2012path}, such update strategy will result in convergence. The complete algorithm is described in Algorithm~\ref{alg:1}.

\begin{algorithm}
\caption{Temporal Logic Policy Search}
\label{alg:1}
\begin{algorithmic}[1]
\State \textbf{Inputs}: Episode horizon $T$, batch size $N$, KL constraint parameter $\epsilon$, smoothed robustness function $\hat{\rho}(s_{0:T},\phi)$, softmax parameter $\alpha>0$
\State Initialize policy $\pi \leftarrow (K_t, k_t, C_t)$
\State Initialize trajectory buffer $\mathcal{B} \leftarrow \emptyset$
\For{$m=1$ to \textit{number of training episodes}}
\State $\tau_m$ = SampleTrajectories($\pi,T$)
\State Store $\tau_m$ in $\mathcal{B}$
\If{Size($\mathcal{B}$) $\geq$ $N$}
\State $\bar{\tau}^i \leftarrow$ GetUpdatedTrajectories($\tau^i$) \textbf{for} $i=1$ to $N$ \textbf{end for} \Comment Using Equation~\eqref{tlps-eq-1}
\State $\bar{\mu}_\tau, \bar{\Sigma}_\tau\leftarrow$ FitTrajectoryDistribution($\{\tau_1, ..., \tau_N\}$) \Comment Using Equation~\eqref{eq:12}
\For{i=1 to N}
\State $p^i \leftarrow \mathcal{N}(\tau^i | \bar{\mu}_{\tau}, \bar{\Sigma}_{\tau})$ 
\State $w^i = \frac{e^{\alpha p^i}}{\sum_{i=1}^N e^{\alpha p^i}}$
\EndFor
\For{t = 0 to T-1}
\State $k_t^\prime \leftarrow \sum_i^N w^i k_t^i$
\State $C_t^\prime \leftarrow \sum_i^N w^i (k_t^i - k_t^\prime)(k_t^i - k_t^\prime)^T$
\EndFor
\State Clear buffer $\mathcal{B} \leftarrow \emptyset$
\EndIf
\EndFor
\end{algorithmic}
\end{algorithm}

\section{Robustness Smoothing}
\label{sec:smoothing}
In the TLPS algorithm introduced in the previous section, one of the steps requires solving a constrained optimization problem that maximizes the robustness (Equation~\eqref{tlps-eq-1}). The original robustness definition in Section~\ref{sec:tltl} is non-differentiable and thus rules out many efficient gradient-based methods. In this section we adopt a smooth approximation of the robustness function using \textit{log-sum-exp}. Specifically 

\begin{equation}
\label{eq:21}
\begin{split}
&\max(x_1, ..., x_n) \approx \frac{1}{\beta}\log \sum_i^n \exp(\beta x_i)\\
&\min(x_1, ..., x_n) \approx -\frac{1}{\beta}\log \sum_i^n \exp(-\beta x_i),
\end{split}
\end{equation}
where $\beta>0$ is a smoothness parameter. 
We denote an iterative max-min function as
\begin{equation*}
M(x)=\mami_{i}f_{i}(x),
\end{equation*}
where $f_{i}(x)=\mami_{j}f_{j}(x)$. $\mami$ denotes a function as $\mami\in\{\max,\min,\mathcal{I}\}$ where $\mathcal{I}$ is a operator such that $\mathcal{I}f_j(x)=f_{j}(x)$.  $i$ and $j$ are index of the functions in $\mami$ and can be any positive integer.
As we showed in Section~\ref{sec:tltl}, any robustness function could be expressed as an iterative max-min function.

Following the \textit{log-sum-exp} approximation, any iterative max-min function (i.e., the robustness of any TL formula) can be approximated as follows
\begin{equation*}
\hat{M}(x)=\frac{1}{\beta}\log{\left(\sum_{i}\exp{(\beta f_{i}(x))}\right)},
\end{equation*}
where $\beta_i>0$ if $\mami_{i}=\max_i$ and $\beta_i<0$ if $\mami_{i}=\min_{i}$.  In the reminder of this section, we provide three lemmas that show the following:
\begin{itemize}
\item the approximation error between $M(x)$ and $\hat{M}(x)$ approaches zero as $\beta_i \rightarrow \infty$. This error is always bounded by the $\log$ of the number of $f(x)$ which is determined by the number of predicates in the TL formulae and the horizon of the problem.  Tuning $\beta_i$ trades off between differentiability of the robustness function and approximation error. 
\item despite the error introduced by the approximation, the optimal points remain invariant (\emph{i.e.} $\argmax_{x}M(x)=\argmax_{x}\hat{M}(x)$). This result provides guarantee that the optimal policy is unchanged when using the approximated TL reward,
\item even though the \textit{log-sum-exp} approximation smooths the robustness function. Locally the ascending directions of $M(x)$ and $\hat{M}(x)$ can be tuned to coincide with small error and the deviation is controlled by the parameter $\beta$. As many policy search methods are local methods that improve the policy near samples, it is important to ensure that the ascending direction of the approximated TL reward does not oppose that of the real one.
\end{itemize}

\noindent Due to space constraints, we will only  provide sketches of the proofs for the lemmas.

\begin{lemma}
\label{lemma:1}
Let $N_{i}$ be the number of terms of $\mami_i$,  $M$ and $\hat{M}$ satisfy
\begin{equation*}
M-\sum_{i\in S_{min}}\frac{1}{|\beta_{i}|}\log{N_{i}}\leq \hat{M} \leq M + \sum_{i\in S_{max}}\frac{1}{\beta_{i}}\log{N_{i}}
\end{equation*}
where $S_{min}=\{i:\mami_{i}=\min_{i}\}$ and $S_{max}=\{i:\mami_{i}=\max_{i}\}$.
\end{lemma}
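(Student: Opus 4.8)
The plan is to prove the two-sided estimate by induction on the depth of nesting of $\mami$ operators in $M$, carrying an interval bound for $\hat M$ up through one operator layer at a time. First I would record the elementary one-layer facts: for reals $x_1,\dots,x_N$ and $\beta>0$, from $e^{\beta\max_\ell x_\ell}\le\sum_\ell e^{\beta x_\ell}\le N e^{\beta\max_\ell x_\ell}$ one gets $\max_\ell x_\ell\le\frac1\beta\log\sum_\ell e^{\beta x_\ell}\le\max_\ell x_\ell+\frac1\beta\log N$, and, replacing $x_\ell$ by $-x_\ell$, $\min_\ell x_\ell-\frac1\beta\log N\le-\frac1\beta\log\sum_\ell e^{-\beta x_\ell}\le\min_\ell x_\ell$. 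So a $\max$ layer can only overestimate its exact value, by at most $\frac1\beta\log N$; a $\min$ layer can only underestimate, by at most $\frac1{|\beta|}\log N$; and an $\mathcal I$ layer is exact. I also record that $x\mapsto\frac1\beta\log\sum_\ell e^{\beta x_\ell}$ (and $-\frac1\beta\log\sum_\ell e^{-\beta x_\ell}$), with $\beta>0$, is nondecreasing in every $x_\ell$, so bounds on the arguments of a smoothed operator pass through to its output.

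For the induction, the hypothesis at an internal node $i$ is that each of its recursively smoothed children $\hat f_\ell$ obeys $f_\ell-L_\ell\le\hat f_\ell\le f_\ell+U_\ell$ with $L_\ell=\sum_{k\in S_{min}^{(\ell)}}\frac1{|\beta_k|}\log N_k$ and $U_\ell=\sum_{k\in S_{max}^{(\ell)}}\frac1{\beta_k}\log N_k$, where $S_{min}^{(\ell)}$ and $S_{max}^{(\ell)}$ collect the $\min$ and $\max$ operators in the subtree rooted at that child; the base case is a predicate leaf, where $\hat f=f$ and both sets are empty. Suppose node $i$ applies $\max$ with arity $N_i$ and parameter $\beta_i>0$, and set $U^\star=\max_\ell U_\ell$, $L^\star=\max_\ell L_\ell$. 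Monotonicity together with the one-layer bound gives $\hat M=\frac1{\beta_i}\log\sum_\ell e^{\beta_i\hat f_\ell}\le\frac1{\beta_i}\log\sum_\ell e^{\beta_i(f_\ell+U^\star)}=U^\star+\frac1{\beta_i}\log\sum_\ell e^{\beta_i f_\ell}\le M+U^\star+\frac1{\beta_i}\log N_i$ and, likewise, $\hat M\ge-L^\star+\frac1{\beta_i}\log\sum_\ell e^{\beta_i f_\ell}\ge M-L^\star$. Because $S_{max}^{(i)}$ is the disjoint union of $\{i\}$ with the children's $S_{max}^{(\ell)}$, because a $\max$ node never enters any $S_{min}$ set, and because every $\frac1{|\beta_k|}\log N_k$ is nonnegative, it follows that $U^\star+\frac1{\beta_i}\log N_i\le\sum_{k\in S_{max}^{(i)}}\frac1{\beta_k}\log N_k$ and $L^\star\le\sum_{k\in S_{min}^{(i)}}\frac1{|\beta_k|}\log N_k$, which is exactly the hypothesis at node $i$. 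The $\min$ case is the same with the signs reversed ($\beta_i<0$, and node $i$ now joins $S_{min}^{(i)}$), and $\mathcal I$ passes through with no change. Applying this estimate at the root, where $S_{max}$ and $S_{min}$ become the sets of all $\max$ and $\min$ operators of the formula, is the claimed inequality.

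The main obstacle --- essentially the only nontrivial point --- is that the per-child error terms $U_\ell$ and $L_\ell$ depend on $\ell$ and hence do not factor out of $\sum_\ell e^{\beta_i(\cdot)}$; the fix is to dominate them by $U^\star,L^\star$ and then use nonnegativity of the $\frac1{|\beta_k|}\log N_k$ to absorb $\max_\ell\sum_{k\in S^{(\ell)}}$ into $\sum_{k\in\bigcup_\ell S^{(\ell)}}$. This is also the step that turns a path-by-path accounting into an additive sum over \emph{all} operators of a given type, and so makes the bound safe but not tight. Finally, unrolling the recursion for the robustness of a TLTL formula, every $N_i$ is at most (number of predicates)$\times T$, so the right-hand error term is of order $\sum_i\frac1{|\beta_i|}\log(T\cdot\#\text{predicates})$ and vanishes as all $\beta_i\to\infty$, which is the remark stated before the lemma.
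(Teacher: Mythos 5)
Your proof is correct and uses essentially the same mechanism as the paper's: the elementary log-sum-exp sandwich $\max_\ell x_\ell\leq\frac{1}{\beta}\log\sum_\ell e^{\beta x_\ell}\leq\max_\ell x_\ell+\frac{1}{\beta}\log N$ (and its $\min$ counterpart), propagated outward through the nesting by monotonicity of the smoothed operators. The only difference is presentational: the paper sketches the argument on a homogeneous three-layer $\max$-$\min$-$\max$ instance where the per-child errors are automatically uniform, while you run a genuine induction on the operator tree and explicitly handle the non-uniform child errors by dominating them with $U^\star,L^\star$ and absorbing into the sum over all operators --- a more complete treatment of the general statement, but not a different route.
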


\begin{proof}
For simplicity and without loss of generality, we illustrate the proof of Lemma~\ref{lemma:1} by constructing an approximation for a finite \emph{max-min-max} problem
\begin{equation*}
\Phi(x)=\max_{i\in I}\min_{j\in J}\max_{k\in K}f_{i,j,k}(x).
\end{equation*}
Let $M_I=|I|$, $M_J=|J|$, $M_K=|K|$, and $\beta_I>0$, $\beta_J<0$, $\beta_K>0$. Firstly, we define $\Phi_{j}(x)=\max_{k\in K}f_{i,j,k}(x)$. Straightforward algebraic manipulation reveals that
\begin{align}
\label{maxEqu}
&\log\left(\sum_{j\in J}\exp(\beta_J\Phi_j)\right)+\frac{\beta_J}{\beta_K}\log(M_K)\\\nonumber
&\leq\log\left(\sum_{j\in J}\left[\sum_{k\in K}\exp(\beta_Kf_{i,j,k}(x))\right]^{\frac{\beta_J}{\beta_K}}\right)\\\nonumber
&\leq \log\left(\sum_{j\in J}\exp(\beta_J\Phi_j)\right).
\end{align}
Furthermore, let us define $\Phi_i=\min_{j\in J}\Phi_j$,  we have
\begin{align*}
&\beta_J\Phi_i\leq \log\left(\sum_{j\in J}\exp(\beta_J\Phi_j)\right)\leq\log(M_J)+\beta_J\Phi_i.
\end{align*}
By substituting into Equation~\eqref{maxEqu}, we obtain
\begin{align*}
\beta_J\Phi_i+\log(M_J)&\geq \log\left(\sum_{j\in J}\exp(\beta_J\Phi_j)\right)\\
&\geq \beta_J\Phi_i+\frac{\beta_J}{\beta_K}\log(M_K).
\end{align*}
Multiplying $\frac{1}{\beta_J}$ on both side, then
\begin{align*}
&\log(\sum_{i\in I}\exp(\beta_I\Phi_i))+\frac{\beta_I}{\beta_J}\log(M_J)\\
&\leq
\log\left(\sum_{i\in I}\left[\sum_{j\in J}\left(\sum_{k\in K}\exp(\beta_Kf_{i,j,k}(x))\right)^{\frac{\beta_J}{\beta_K}}\right]^{\frac{\beta_I}{\beta_J}}\right)\\
&~~~~~\leq\log(\sum_{i\in I}\exp(\beta_I\Phi_i))+\frac{\beta_I}{\beta_K}\log(M_K).
\end{align*}
Finally, let $\Phi=\max_{i\in I}\Phi_i$, then we have
\begin{align}
\nonumber
&\exp(\beta_I\Phi)\leq\sum_{i\in I}\exp(\beta_I\Phi_i)\leq M_I\exp(\beta_I\Phi)\\\label{minEqu}
&\beta_I\Phi\leq\log(\sum_{i\in I}\exp(\beta_I\Phi_i))\leq \log(M_I)+\beta_I\Phi
\end{align}
Substitute into Equation~\eqref{minEqu}
\begin{align*}
&\beta_I\Phi+\frac{\beta_I}{\beta_J}\log(M_J)\\
&\leq\log\left(\sum_{i\in I}\left[\sum_{j\in J}\left(\sum_{k\in K}\exp(\beta_Kf_{i,j,k}(x))\right)^{\frac{\beta_J}{\beta_K}}\right]^{\frac{\beta_I}{\beta_J}}\right)\\
&\leq\beta_I\Phi+\log(M_I)+\frac{\beta_I}{\beta_K}\log(M_K).
\end{align*}
Then we conclude the proof.
\end{proof}

\begin{lemma}
\label{lemma:2}
Suppose $X^*=\{x^*:x^*\in\argmax_{x}{M}(x)\}$, there exist a positive constant $B$ such that for all $|\beta|\geq B$ $x^*$ is also one of the maximum point of $\hat{M}(x)$ for any $x^*$, \emph{i.e.}
\begin{equation*}
x^*\in\argmax_{x}\hat{M}(x).
\end{equation*}

\end{lemma}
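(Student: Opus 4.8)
The plan is to get everything from the two-sided estimate of Lemma~\ref{lemma:1} and then compare function values directly. Write $e_-(\beta)=\sum_{i\in S_{min}}\frac{1}{|\beta_i|}\log N_i$ and $e_+(\beta)=\sum_{i\in S_{max}}\frac{1}{\beta_i}\log N_i$, so that Lemma~\ref{lemma:1} reads $M(x)-e_-(\beta)\le\hat M(x)\le M(x)+e_+(\beta)$ for every $x$, with $e_-(\beta),e_+(\beta)\to 0$ as $\min_i|\beta_i|\to\infty$. Fix $x^*\in X^*$ and set $M^\star=M(x^*)=\max_x M(x)$. An immediate consequence is $\hat M(x^*)\ge M^\star-e_-(\beta)\ge\hat M(x)-e_+(\beta)-e_-(\beta)$ for every $x$, i.e. $x^*$ is always an $\big(e_+(\beta)+e_-(\beta)\big)$-approximate maximizer of $\hat M$; promoting ``approximate'' to ``exact'' once $|\beta|$ is large is what the lemma asks for, and this coarse estimate is the quantitative backbone of the argument.

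First I would dispose of the points that are not near-optimal for $M$. Pointwise, Lemma~\ref{lemma:1} gives $\hat M(x)\le M(x)+e_+(\beta)$, so on the set $\{x:M(x)\le M^\star-\eta\}$, for any fixed $\eta>0$, we have $\hat M(x)\le M^\star-\eta+e_+(\beta)$, whereas $\hat M(x^*)\ge M^\star-e_-(\beta)$. Choosing $B$ so that $e_+(\beta)+e_-(\beta)<\eta$ whenever $\min_i|\beta_i|\ge B$ makes $\hat M(x^*)>\hat M(x)$ on that whole set. Hence the only competitor that could keep $x^*$ out of $\argmax_x\hat M$ must lie in the (arbitrarily small, as $\eta\downarrow 0$) neighbourhood $\{x:M(x)>M^\star-\eta\}$ of $X^*$, and only this local region remains to be controlled.

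That local comparison near $X^*$ is where I expect the real work, and the need for a mild nondegeneracy hypothesis, to lie. The log-sum-exp increment $\hat M(x)-M(x)$ does not vanish on $X^*$: at a point where several of the nested $\max/\min$ branches are simultaneously tight it is larger (it behaves like a signed sum of terms $\frac{1}{\beta_i}\log(\text{number of active branches at }x)$), so a priori the smoothing could favour a more ``degenerate'' near-optimal point over $x^*$. I would expand $\hat M(x)=M(x)+\frac{1}{\beta_{\mathrm{root}}}\log\!\big(\sum_{c}\exp(\beta_{\mathrm{root}}(\hat M_c(x)-M(x)))\big)$ at the root of the parse tree of $\phi$ and induct on the nesting depth, showing that, restricted to the neighbourhood above, the increment is maximized on $X^*$ and that $x^*$ attains that maximum — using that on $X^*$ every active branch takes the common optimal value, together with a nondegeneracy assumption on the predicates that rules out redundant (identically-valued) branches among the active ones near $x^*$; this last point is precisely what is needed for ``every'' $x^*\in X^*$, rather than merely ``some'' $x^*$, to survive the smoothing. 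Combining this with the second paragraph gives $\hat M(x^*)\ge\hat M(x)$ for all $x$ once $\min_i|\beta_i|\ge B$, i.e. $x^*\in\argmax_x\hat M(x)$; the level-by-level bookkeeping of the nested increments, and pinning down the minimal nondegeneracy condition it requires, is the delicate step.
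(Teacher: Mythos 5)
Your route is genuinely different from the paper's. The paper's sketch works directly with the exponential sums: it rewrites $\hat M(x^*)\ge\hat M(x)$ as $\sum_i\exp(\beta f_i(x^*))\ge\sum_i\exp(\beta f_i(x))$, isolates the term indexed by $I_{max}=\argmax_i f_i(x^*)$, and asserts that for $\beta$ large this dominant term outweighs the rest. You instead bootstrap from the two-sided bound of Lemma~\ref{lemma:1} and localize: the uniform error bound eliminates every competitor with $M(x)\le M^\star-\eta$ once $e_+(\beta)+e_-(\beta)<\eta$. That part is clean, correct, and more transparent than the paper's term-by-term manipulation, and it gives an explicit quantitative threshold for the ``far'' points; the cost is that all the difficulty is pushed into the near-optimal set $\{x:M(x)>M^\star-\eta\}$, where Lemma~\ref{lemma:1} is too coarse to separate $x^*$ from its competitors.

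The gap is in your third paragraph, and the intermediate claim you propose there is not the right one. You aim to show that the increment $\hat M(x)-M(x)$ is maximized on $X^*$ over the near-optimal set; that is false in general and also not what is needed. The increment is governed by the number of (near-)active branches, so it is largest at the most degenerate points, which need not lie in $X^*$: if $x^*$ has a single active branch while a nearby $x$ with $M(x)=M^\star-\epsilon$ has two, then $\hat M(x)-M(x)\approx\frac{1}{\beta}\log 2$ exceeds $\hat M(x^*)-M(x^*)\approx 0$. What must actually be shown is the combined inequality $M(x)+\bigl(\hat M(x)-M(x)\bigr)\le M(x^*)+\bigl(\hat M(x^*)-M(x^*)\bigr)$, i.e.\ that the value deficit $M^\star-M(x)$ beats the increment surplus. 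For any fixed $\epsilon>0$ this holds once $\beta$ is large, but if degenerate points accumulate at value $M^\star$ (deficit tending to $0$ while the surplus stays at $\frac{1}{\beta}\log 2$), no finite $\beta$ works and the lemma fails without the nondegeneracy/compactness hypothesis you flag. To be fair, the paper's own sketch glosses over exactly the same configuration (its ``there always exists a positive constant $B$'' is asserted, not proved), so you have correctly located where the real content of the lemma lives; but as written your plan does not close that step, and the parse-tree induction would have to be carried out with the corrected inequality rather than the increment-maximization claim.
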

\begin{proof}
We start by considering $M$ as a maximum function, i.e. $M(x)=\max_{i}f_{i}(x)$.let us denote $I_{max}=\argmax_{i}f_{i}(x^*)$, then $x^*\in\argmax_{x}\hat{M}(x)$ when
\begin{align*}
&\sum_{i\neq I_{max}}\exp{(\beta f_{i}(x^*))}-\sum_{i\neq I_{max}}\exp{(\beta f_{i}(x))}\\
&\leq
\exp{(\beta f_{Imax}(x^*))}-\exp{(\beta f_{Imax}(x))}.
\end{align*}
There always exists a positive constant $B$, such that for all $\beta>B$ the above statement holds. Lemma~\ref{lemma:2} can be obtained by using the above proof for the $\mami$ function in general.
\end{proof}

\begin{lemma}
\label{lemma:3}
Let us denote the sub-gradient of $M$ as $\frac{\partial{M}}{\partial{x}}=\{\frac{\partial{M}}{\partial{x_1}},\ldots,\frac{\partial{M}}{\partial{x_N}}\}$ and the gradient of $\hat{M}$ as $\frac{\partial{\hat{M}}}{\partial{x}}=\{\frac{\partial{\hat{M}}}{\partial{x_1}},\ldots,\frac{\partial{\hat{M}}}{\partial{x_N}}\}$. There exists a positive constant $B$ such that for all $|\beta|\geq B$, $\frac{\partial{M}}{\partial{x}}$ and $\frac{\partial{\hat{M}}}{\partial{x}}$ satisfy
\begin{equation*}
\langle\frac{\partial{M}}{\partial{x}},\frac{\partial{\hat{M}}}{\partial{x}}\rangle\geq 0,
\end{equation*}
where $\langle\cdot,\cdot\rangle$ denotes the inner product.
\end{lemma}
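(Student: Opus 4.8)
\emph{Proof sketch proposal.} The plan is to show that the true gradient $\frac{\partial \hat M}{\partial x}$ converges, as the smoothing parameters grow, to the particular subgradient of $M$ picked out by $\frac{\partial M}{\partial x}$, so that the inner product in question converges to $\|\frac{\partial M}{\partial x}\|^2 \ge 0$ and is therefore non-negative once $|\beta|$ is large.

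First I would record the elementary differentiation rule for one log-sum-exp layer: if $g(x)=\frac{1}{\beta}\log\sum_i\exp(\beta f_i(x))$ then $\nabla g(x)=\sum_i w_i(x;\beta)\,\nabla f_i(x)$ with softmax weights $w_i=\exp(\beta f_i)/\sum_j\exp(\beta f_j)\ge 0$ and $\sum_i w_i=1$; that is, $\nabla g$ is a convex combination of the inner gradients. Applying this layer by layer through the nested construction of $\hat M$ from $M$ (the same recursion used in Lemma~\ref{lemma:1}), one obtains $\frac{\partial \hat M}{\partial x}=\sum_{\ell}W_{\ell}(x;\beta)\,\nabla f_{\ell}(x)$, where $\ell$ ranges over the leaf predicates of the formula, $W_\ell\ge 0$, $\sum_\ell W_\ell=1$, and each $W_\ell$ is a product of softmax weights along the root-to-leaf path.

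Second, I would take the limit $|\beta_i|\to\infty$. At each layer the softmax weights concentrate on the index realizing the relevant $\max$ (when $\beta_i>0$) or $\min$ (when $\beta_i<0$); this is exactly the fact already exploited in Lemma~\ref{lemma:1}. Propagating through the tree, $W_\ell\to 0$ for every leaf not lying on a path that attains the iterated max--min value $M(x)$, while the mass on the ``active'' leaves tends to $1$. Hence $\frac{\partial \hat M}{\partial x}$ converges to a convex combination of $\{\nabla f_\ell(x)\}$ over active leaves, which is precisely the (componentwise) derivative $\frac{\partial M}{\partial x}$: at a differentiability point of $M$ the active gradient is unique and equals $\frac{\partial M}{\partial x}$, and at a kink one fixes the same averaging convention on both $M$ and $\hat M$. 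Then $\langle\frac{\partial M}{\partial x},\frac{\partial \hat M}{\partial x}\rangle\to\|\frac{\partial M}{\partial x}\|^2\ge 0$; if $\frac{\partial M}{\partial x}=0$ the inner product vanishes identically, and otherwise the limit is strictly positive, so a threshold $B$ exists with $\langle\frac{\partial M}{\partial x},\frac{\partial \hat M}{\partial x}\rangle>0$ for all $|\beta|\ge B$. Either way the claim follows.

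The main obstacle I anticipate is the bookkeeping for the iterated structure: making rigorous that the chain rule through the stacked log-sum-exp layers produces the convex-combination form with weights that factor along root-to-leaf paths, and that these path weights genuinely concentrate on the active set when several $\beta_i$ tend to infinity simultaneously (possibly at different rates, with max and min layers interleaved). A secondary subtlety is the set-valuedness of the subgradient of $M$ at non-differentiable points — one must commit to a selection convention for $\frac{\partial M}{\partial x}$ so that it agrees with $\lim\frac{\partial \hat M}{\partial x}$; the ``average over active leaves'' convention makes this immediate but should be stated explicitly. I would not attempt to make $B$ uniform in $x$, since the lemma is a pointwise statement.
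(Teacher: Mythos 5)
Your proposal follows essentially the same route as the paper's own proof: both express $\partial\hat M/\partial x$ as a softmax-weighted convex combination of the leaf gradients $\nabla f_i$ and argue that these weights concentrate on the active index set $I(x)$ as $|\beta|\to\infty$, so that the approximate gradient aligns with the subgradient of $M$ for $|\beta|$ beyond some threshold $B$. You are in fact somewhat more careful than the paper on two points it glosses over --- propagating the weights through the nested max--min layers rather than treating only the single point-wise maximum case, and fixing a selection convention for $\partial M/\partial x$ at tie points (without which the claimed inequality can fail for an arbitrary choice of subgradient) --- but the underlying mechanism is identical.
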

\begin{proof}
Here we will only provide the proof when $M$ is a point-wise maximum of convex functions. One can generalize it to any iterative max-min function using the chain rule. Supposing $M(x)=\max_{i}f_{i}(x)$, the sub-gradient of $M(x)$ is 
\begin{equation*}
\frac{\partial M}{\partial x}=\partial f_{i}(x),i\in I(x),
\end{equation*}
where $I(x)=\{i|f_{i}(x)=f(x)\}$ is the set of ''active'' functions.
The corresponding $\hat{M}$ is defined as
\begin{equation*}
\hat{M}=\frac{1}{\beta}\log\left(\sum_{i}\exp{(\beta f_{i}(x))}\right),
\end{equation*}
where its first order derivative is
\begin{equation*}
\frac{\partial\hat{M}}{\partial x}=\sum_{i}\frac{\exp(\beta f_{i}(x))\partial f_{i}(x)}{\sum_{k}\exp{(\beta f_{k}(x))}}.
\end{equation*}
$\langle\frac{\partial M}{\partial x},\frac{\partial \hat{M}}{\partial x}\rangle>0$ if
\begin{align*}
&\frac{\exp{(\beta f_{i}(x))}}{\sum_{k}\exp{(\beta f_{k}(x))}}f_{i}(x)\\
&\geq\sum_{j\notin I(x)}\frac{\exp{(\beta f_{j}(x))}}{\sum_{k}\exp{(\beta f_{k}(x))}}f_{j}(x),\forall i\in I(x).
\end{align*}
Therefore, there always exists a positive constant $B$, such that $\langle\frac{\partial M}{\partial x},\frac{\partial \hat{M}}{\partial x}\rangle>0$ holds for all $\beta>B$.
\end{proof}


\section{Case Studies}
\label{sec:6}

In this section, we apply TLPS on a vehicle navigation example. As shown in Figure 1, the vehicle navigates in a 2D environment. It has a 6 dimensional continuous state feature space $s=[x,y,\theta, \dot{x}, \dot{y}, \dot{\theta}]$ where $(x,y)$ is the position of its center and $\theta$ is the angle its heading makes with the $x$-axis. Its 2 dimensional action space $a=[a_v, a_\Phi]$ consists of the forward driving speed and the steering angle of its front wheels. The car moves according to dynamics

\begin{equation}
\begin{split}
\dot{x} & = a_v \cos \theta \\
\dot{y} & = a_v \sin \theta \\
\dot{\theta} &= \frac{a_v}{L}\tan a_\Phi
\end{split}
\end{equation}
\noindent with added Gaussian noise ($L$ is the distance between the front and rear axles). However the learning agent is not provided with this model and needs to learn the desired control policy through trial-and-error.
\begin{figure}[tbh]
\label{fig:1}
\begin{center}
\includegraphics[width=1.\linewidth]{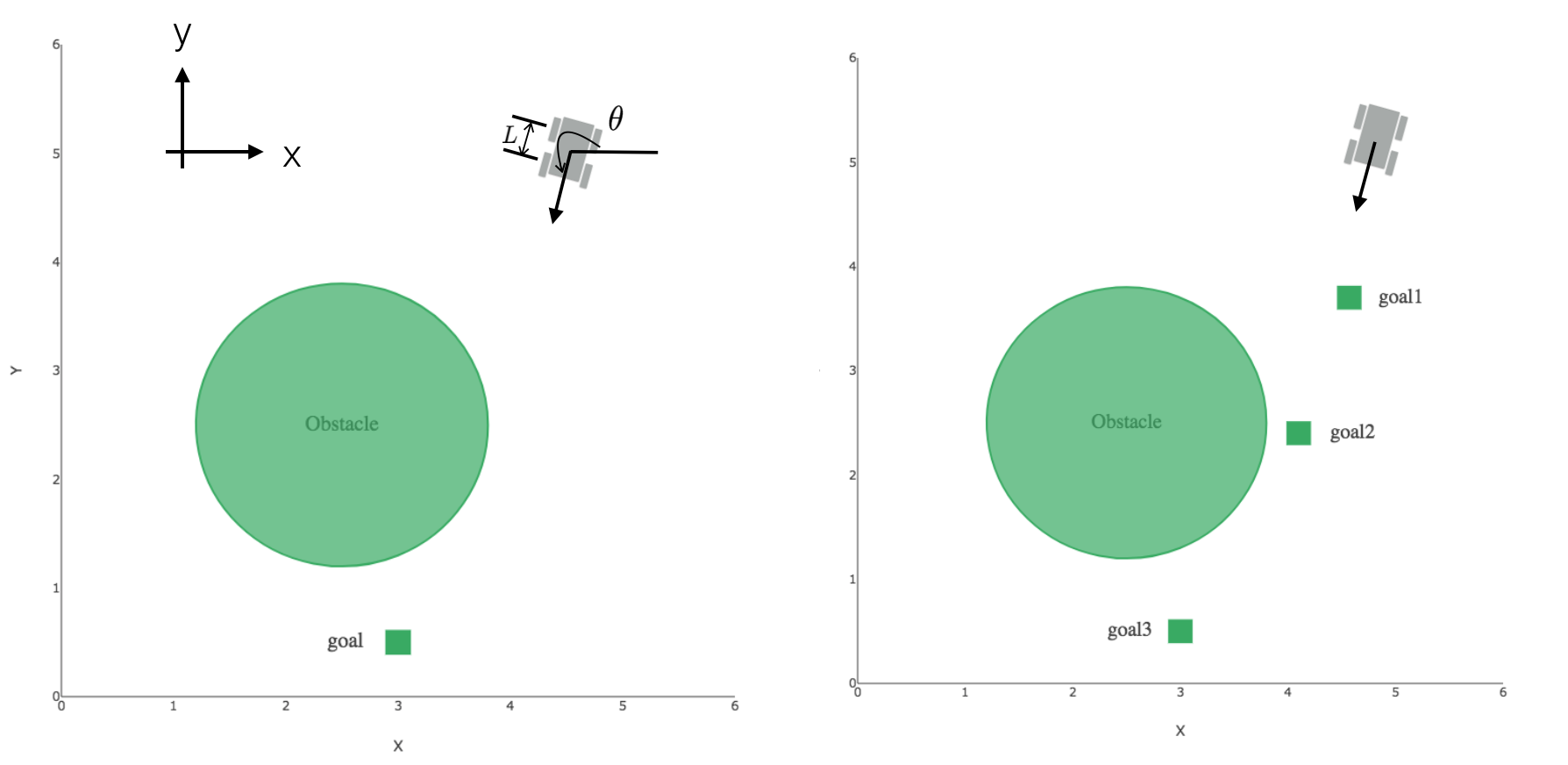}
\caption{Vehicle navigation task using TLTL specifications. The vehicle is shown in brown, the obstacle is shown as the green circle and the goals are shown as the green squares. \textit{left}: Task 1 is to reach the goal while avoiding the obstacle. \textit{right}: Task 2 is to visit goals 1,2,3 in this order while avoiding the obstacle}
\end{center}
\end{figure}

We test TLPS on two tasks with increasing difficulty. In the first task, the vehicle is required to reach the goal $g$ while avoiding the obstacle $o$. We express this task as a TLTL specification 

\begin{equation}
\label{eq:16}
\begin{split}
\phi_1 = &\Event(x > x_g^{l} \land x < x_g^{u} \land y > y_g^{l} \land y < y_g^{u}) \land \\ &\Always(d_{o} > r_{o} ).
\end{split}
\end{equation}
\noindent In Equation~\eqref{eq:16},  $(x_g^{l}, x_g^{u}, y_g^{l}, y_g^{u})$ defines the square shaped goal region, $d_o$ is the Euclidean distance between the vehicle's center and the center of the obstacle, $r_o$ is the radius of the obstacle. In English, $\phi_1$ describes the task of "\textit{eventually} reach goal $\boldsymbol g$ \textit{and always} stay away from the obstacle". Using the quantitative semantics described in Section~\ref{sec:tltl} , the robustness of $\phi_i$ is 

\begin{equation}
\label{eq:17}
\begin{split}
&\rho_1(\phi_1, (x, y)_{0:T}) =\\
&\min\Bigg(\underset{t \in [0,T)}{\max}\bigg( \min\Big( x_t - x_g^{l}, x_g^{u} - x_t, y_t - y_g^{l}, y_g^{u} - y_t\Big)\bigg), \\
&\underset{t\in [0,T)}{\min}\Big(d^t_{o}-r_{o}\Big)\Bigg),
\end{split}
\end{equation}

\noindent where $(x_t, y_t)$ and $d_o^t$ are the vehicle position and distance to obstacle center at time $t$. Using the \textit{log-sum-exp}, approximation for $\rho_1(\phi_1, (x, y)_{0:T}) $ can be obtained as 

\begin{equation}
\label{eq:18}
\begin{split}
&\hat{\rho}_1(\phi_1, (x, y)_{0:T}) = \\
& -\frac{1}{\beta}\log \sum_{t=0}^T\Big ( \exp[-\beta( x_t - x_g^{l})] +  \exp[-\beta( x_g^{u} - x_t)] + \\
& \exp[-\beta(y_t - y_g^{l})] + \exp[-\beta( y_g^{u} - y_t)] + \exp[-\beta(d_o^t - r_o)]\Big).
\end{split}
\end{equation}
\noindent Because we used the same $\beta$ throughout the approximation, intermediate $\log$ and $\exp$ cancel and we end up with Equation~\eqref{eq:18}. $\hat{\rho}_1(\phi_1, (x, y)_{0:T})$ is used in the optimization problem defined in Equation~\eqref{tlps-eq-1}.

In task 2, the vehicle is required to visit goals 1, 2, 3 in this specific order while avoiding the obstacle. Expressed in TLTL results in the specification

\begin{equation}
\label{eq:19}
\begin{split}
\phi_2=&(\psi_{g_1} \,\, \mathcal{T} \,\, \psi_{g_2} \,\, \mathcal{T} \,\, \psi_{g_3}) \land  (\neg(\psi_{g_2} \lor \psi_{g_3}) \,\, \mathcal{U} \,\, \psi_{g_1}) \land \\
&(\neg(\psi_{g_3}) \,\, \mathcal{U} \,\, \psi_{g_2}) \land (\underset{i=1,2,3}{\bigwedge} \Box(\psi_{g_i} \Rightarrow \bigcirc \Box \neg \psi_{g_i})) \land \\ 
&\Box (d_o > r_o),
\end{split}
\end{equation}

\noindent where $\bigwedge$  is a shorthand for a sequence of conjunction, $\psi_{g_i}:x > x_{g_i}^{l} \land x < x_{g_i}^{u} \land y > y_{g_i}^{l} \land y < y_{g_i}^{u}$ are the predicates for goal $g_i$. In English, $\phi_2$ states "visit $g_1$ \textit{then} $g_2$ \textit{then} $g_3$, \textit{and don't} visit $g_2$ \textit{or} $g_3$ \textit{until} visiting $g_1$, \textit{and don't} visit $g_3$  \textit{until} visiting $g_2$, \textit{and always} if visited $g_i$ \textit{implies next always don't} visit $g_i$ (don't revisit goals), \textit{and always} avoid the obstacle" . Due to space constraints the robustness of $\phi_2$ and its approximation will not be explicitly presented, but it will take a similar form of nested $\min()/\max()$ functions that can be generated from the quantitative semantics of TLTL.

During training time, the obstacle is considered "penetrable" in that the car can surpass its boundary with a negative reward granted according to the penetrated depth. In practice we find that this facilitates learning compared to a single negative reward given at contact with the obstacle and restarting the episode.  

Each episode has a horizon $T=200$ time-steps. 40 episodes of sample trajectories are collected and used for each update iteration. The policy parameters are initialized randomly in a given region (the policy covariances should be initialized to relatively high values to encourage exploration). Each task is trained for 50 iterations and the results are presented in Figures 2 and 3. Figure 2 shows sample trajectory distributions for selected iterations. Trajectory distributions are illustrated as shaded regions with width equal to 2 standard deviations. Lighter shade indicates earlier time in the training process. We used $\beta=9$ for this set of results. We can see from Figure 2 that the trajectory distribution is able to converge and satisfy the specification. Satisfaction occurs much sooner for task 1 (around 30 iterations) compared with task 2 (around 50 iterations).  

\begin{figure*}
 \centering
 \begin{multicols}{2}
  \includegraphics[width=2.0\columnwidth]{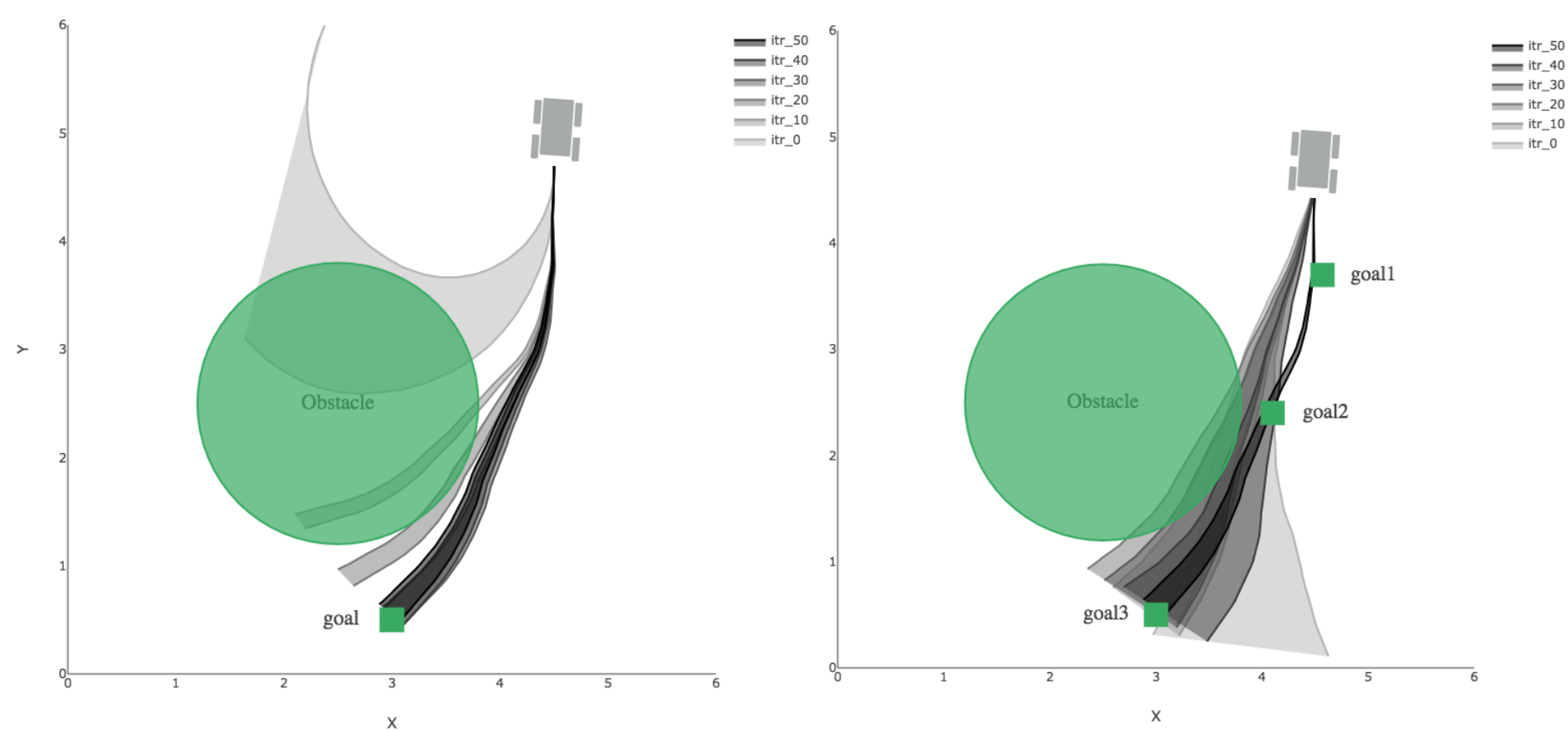}\label{fig:2}
  \end{multicols}
  \caption{Sample trajectory distributions for selected iterations for \textit{left}: task 1, \textit{right}: task 2. Each iteration consists of 40 sample trajectories each having a horizon of 200 time-steps. The width of each distribution is 2 standard deviations and color represent recency in the training process (lighter color indicates earlier time in training).}
\end{figure*}

\begin{figure*}
 \centering
 \begin{multicols}{2}
  \includegraphics[width=2.0\columnwidth]{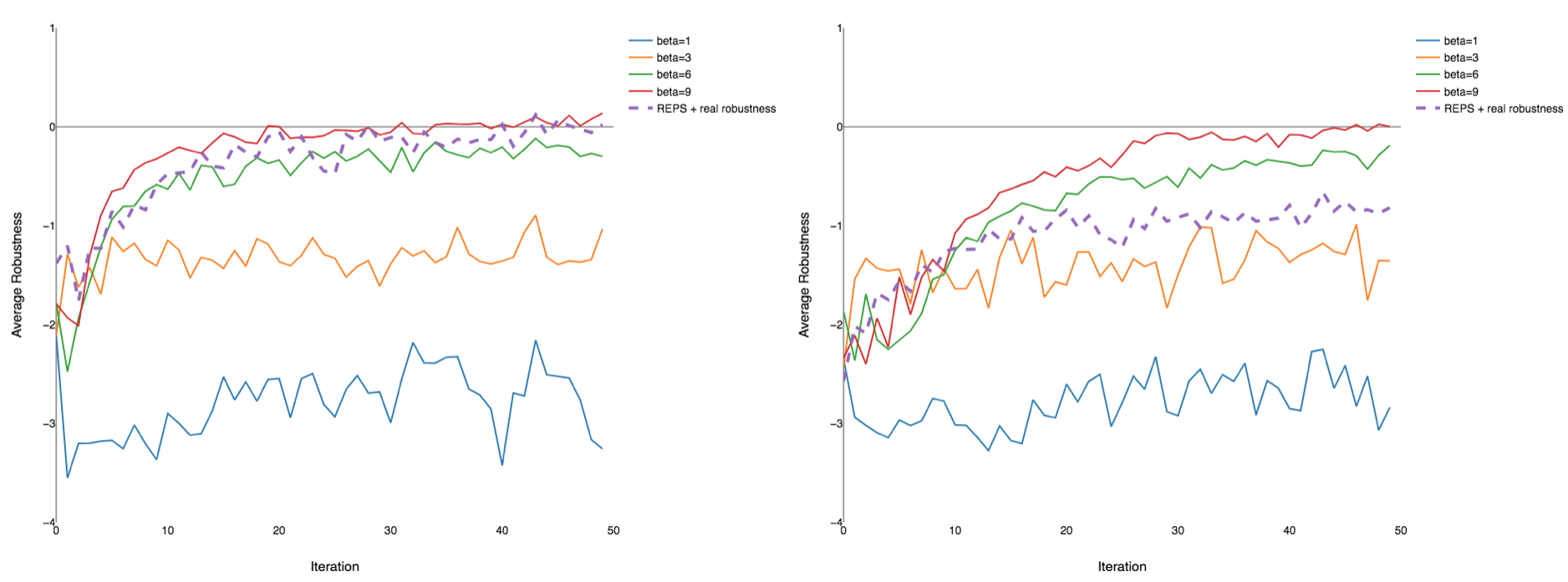}\label{fig:3}
  \end{multicols}
  \caption{Average return vs training iteration for \textit{left}: task 1, \textit{right}: task2. The average return is represented as the original robustness value calculated from sample trajectories. TLPS is compared with varying $\beta$. REPS with the original robustness as terminal reward is used as a baseline. }
\end{figure*}

Figure 3 compares the average robustness (of 40 sample trajectories) per iteration for TLPS with different values of the approximation parameters $\beta$  in~\eqref{eq:21}. As a baseline, we also compare TLPS with episode-based relative entropy policy search (REPS)~\cite{Deisenroth2011}. The original robustness function is used as the terminal reward for REPS and our previous work~\cite{li2016reinforcement} has shown that this combination outperforms heuristic reward designed for the same robotic control task.  The magnitude of robustness value changes with varying $\beta$. Therefore, in order for the comparison to be meaningful (putting average returns on the same scale), sample trajectories collected for each comparison case are used to calculate their original robustness values against the TLTL formula and plotted in Figure 3 (a similar approach taken in~\cite{li2016reinforcement}). The original robustness is chosen as the comparison measure for its semantic integrity (value greater than zero indicates satisfaction).

Results in Figure 3 shows that larger $\beta$ results in faster convergence and higher average return. This is consistent with the results of Section~\ref{sec:smoothing} since larger $\beta$ indicates lower approximation error. However, this advantage diminishes as $\beta$ increases due to the approximated robustness function losing differentiability. For the relatively easy task 1, TLPS performed comparatively with REPS. However, for the harder task 2, TLPS exhibits a clear advantage both in terms of rate of convergence and quality of the learned policy.


TLPS is a local policy search method that offers gradual policy improvement, controllable policy space exploration and smooth trajectories. These characteristics are desirable for learning control policies for systems that involve physical interactions with the environment. S (likewise for other local RL methods). Results in Figure 3 show a rapid exploration decay in the first 10 iterations and little improvement is seen after the $40^{th}$ iteration. During experiments, the authors find that adding a policy covariance damping schedule can help with initial exploration and final convergence. A principled exploration strategy is possible future work. 

Similar to many policy search methods, TLPS is a local method. Therefore, policy initialization is a critical aspect of the algorithm (compared with value-based methods such as Q-learning). In addition, because the trajectory update step in Equation~\eqref{tlps-eq-1} does not consider the system dynamics and relies on being close to sample trajectories, divergence can occur with a small $\beta$ or a large learning rate. Making the algorithm more robust to hyperparameter changes is also an important future direction.  

\section{Conclusion} 
\label{sec:conclusion}
As reinforcement learning research advance and more general RL agents are developed, it becomes increasingly important that we are able to correctly communicate our intentions to the learning agent. A well designed RL agent will be proficient at finding a policy that maximizes its returns, which means it will exploit any flaws in the reward function that can help it achieve this goal. Human intervention can sometimes help alleviate this problem by providing additional feedback. However, as discussed in~\cite{dewey2014reinforcement}, if the communication link between human and the agent is unstable (space exploration missions) or the agent operates on a timescale difficult for human to respond to (financial trading agent), it is critical that we are confident about what the agent will learn.

In this paper, we applied temporal logic as the task specification language for reinforcement learning. The quantitative semantics of TL is adopted for accurate expression of logical relationships in an RL task. We explored robustness smoothing as a means to transform the TL robustness to a differentiable function and provided theoretical results on the properties of the smoothed robustness. We proposed temporal logic policy search (TLPS), a model-free method that utilizes the smoothed robustness and operates in continuous state and action spaces. Simulation experiments are conducted to show that TLPS is able to effectively find control policies that satisfy given TL specifications. 


\bibliographystyle{IEEEtran}
\bibliography{references}

\begin{thebibliography}{10}
\providecommand{\url}[1]{#1}
\csname url@samestyle\endcsname
\providecommand{\newblock}{\relax}
\providecommand{\bibinfo}[2]{#2}
\providecommand{\BIBentrySTDinterwordspacing}{\spaceskip=0pt\relax}
\providecommand{\BIBentryALTinterwordstretchfactor}{4}
\providecommand{\BIBentryALTinterwordspacing}{\spaceskip=\fontdimen2\font plus
\BIBentryALTinterwordstretchfactor\fontdimen3\font minus
  \fontdimen4\font\relax}
\providecommand{\BIBforeignlanguage}[2]{{%
\expandafter\ifx\csname l@#1\endcsname\relax
\typeout{** WARNING: IEEEtran.bst: No hyphenation pattern has been}%
\typeout{** loaded for the language `#1'. Using the pattern for}%
\typeout{** the default language instead.}%
\else
\language=\csname l@#1\endcsname
\fi
#2}}
\providecommand{\BIBdecl}{\relax}
\BIBdecl

\bibitem{Mnih2015}
V.~Mnih, K.~Kavukcuoglu, D.~Silver, A.~a. Rusu, J.~Veness, M.~G. Bellemare,
  A.~Graves, M.~Riedmiller, A.~K. Fidjeland, G.~Ostrovski, S.~Petersen,
  C.~Beattie, A.~Sadik, I.~Antonoglou, H.~King, D.~Kumaran, D.~Wierstra,
  S.~Legg, and D.~Hassabis, ``{Human-level control through deep reinforcement
  learning},'' \emph{Nature}, vol. 518, no. 7540, pp. 529--533, 2015.

\bibitem{Silver2016}
\BIBentryALTinterwordspacing
D.~Silver, A.~Huang, C.~J. Maddison, A.~Guez, L.~Sifre, G.~V.~D. Driessche,
  J.~Schrittwieser, I.~Antonoglou, V.~Panneershelvam, M.~Lanctot, S.~Dieleman,
  D.~Grewe, J.~Nham, N.~Kalchbrenner, I.~Sutskever, T.~Lillicrap, M.~Leach, and
  K.~Kavukcuoglu, ``{Mastering the game of Go with deep neural networks and
  tree search},'' \emph{Nature}, vol. 529, no. 7585, pp. 484--489, 2016.
  [Online]. Available: \url{http://dx.doi.org/10.1038/nature16961}
\BIBentrySTDinterwordspacing

\bibitem{Levine2015}
\BIBentryALTinterwordspacing
S.~Levine, C.~Finn, T.~Darrell, and P.~Abbeel, ``{End-to-End Training of Deep
  Visuomotor Policies},'' \emph{Arxiv}, p. 6922, 2015. [Online]. Available:
  \url{http://arxiv.org/abs/1504.00702}
\BIBentrySTDinterwordspacing

\bibitem{Levine2016}
\BIBentryALTinterwordspacing
S.~Levine, P.~Pastor, A.~Krizhevsky, and D.~Quillen, ``{Learning Hand-Eye
  Coordination for Robotic Grasping with Deep Learning and Large-Scale Data
  Collection},'' \emph{arXiv}, 2016. [Online]. Available:
  \url{http://arxiv.org/abs/1603.02199v1}
\BIBentrySTDinterwordspacing

\bibitem{gu2016deep}
S.~Gu, E.~Holly, T.~Lillicrap, and S.~Levine, ``Deep reinforcement learning for
  robotic manipulation with asynchronous off-policy updates,'' \emph{arXiv
  preprint arXiv:1610.00633}, 2016.

\bibitem{peters2003reinforcement}
J.~Peters, S.~Vijayakumar, and S.~Schaal, ``Reinforcement learning for humanoid
  robotics,'' in \emph{Proceedings of the third IEEE-RAS international
  conference on humanoid robots}, 2003, pp. 1--20.

\bibitem{Amodei2016}
\BIBentryALTinterwordspacing
D.~Amodei, C.~Olah, J.~Steinhardt, P.~Christiano, J.~Schulman, and
  D.~Man{\'{e}}, ``{Concrete Problems in AI Safety},'' pp. 1--29, 2016.
  [Online]. Available: \url{http://arxiv.org/abs/1606.06565}
\BIBentrySTDinterwordspacing

\bibitem{dewey2014reinforcement}
D.~Dewey, ``Reinforcement learning and the reward engineering principle,'' in
  \emph{2014 AAAI Spring Symposium Series}, 2014.

\bibitem{arel2012threat}
I.~Arel, ``The threat of a reward-driven adversarial artificial general
  intelligence,'' in \emph{Singularity Hypotheses}.\hskip 1em plus 0.5em minus
  0.4em\relax Springer, 2012, pp. 43--60.

\bibitem{Ng1999}
``{Policy invariance under reward transformations : Theory and application to
  reward shaping},'' \emph{Sixteenth International Conference on Machine
  Learning}, vol.~3, pp. 278--287, 1999.

\bibitem{Ng2000}
\BIBentryALTinterwordspacing
A.~Ng and S.~Russell, ``{Algorithms for inverse reinforcement learning},''
  \emph{Proceedings of the Seventeenth International Conference on Machine
  Learning}, vol.~0, pp. 663--670, 2000. [Online]. Available:
  \url{http://www-cs.stanford.edu/people/ang/papers/icml00-irl.pdf}
\BIBentrySTDinterwordspacing

\bibitem{sermanet2016unsupervised}
P.~Sermanet, K.~Xu, and S.~Levine, ``Unsupervised perceptual rewards for
  imitation learning,'' \emph{arXiv preprint arXiv:1612.06699}, 2016.

\bibitem{leahy2016persistent}
K.~Leahy, D.~Zhou, C.-I. Vasile, K.~Oikonomopoulos, M.~Schwager, and C.~Belta,
  ``Persistent surveillance for unmanned aerial vehicles subject to charging
  and temporal logic constraints,'' \emph{Autonomous Robots}, vol.~40, no.~8,
  pp. 1363--1378, 2016.

\bibitem{sadraddini2016provably}
S.~Sadraddini and C.~Belta, ``A provably correct mpc approach to safety control
  of urban traffic networks,'' in \emph{American Control Conference (ACC),
  2016}.\hskip 1em plus 0.5em minus 0.4em\relax IEEE, 2016, pp. 1679--1684.

\bibitem{coogan2016traffic}
S.~Coogan, E.~A. Gol, M.~Arcak, and C.~Belta, ``Traffic network control from
  temporal logic specifications,'' \emph{IEEE Transactions on Control of
  Network Systems}, vol.~3, no.~2, pp. 162--172, 2016.

\bibitem{Aksaray2016}
D.~Aksaray, A.~Jones, Z.~Kong, M.~Schwager, and C.~Belta, ``Q-learning for
  robust satisfaction of signal temporal logic specifications,'' in
  \emph{Decision and Control (CDC), 2016 IEEE 55th Conference on}.\hskip 1em
  plus 0.5em minus 0.4em\relax IEEE, 2016, pp. 6565--6570.

\bibitem{li2016reinforcement}
X.~Li, C.-I. Vasile, and C.~Belta, ``Reinforcement learning with temporal logic
  rewards,'' \emph{IEEE International Conference on Intelligent Robots and
  Systems}, 2017.

\bibitem{Deisenroth2011}
\BIBentryALTinterwordspacing
M.~P. Deisenroth, ``{A Survey on Policy Search for Robotics},''
  \emph{Foundations and Trends in Robotics}, vol.~2, no.~1, pp. 1--142, 2011.
  [Online]. Available:
  \url{http://www.nowpublishers.com/articles/foundations-and-trends-in-robotics/ROB-021}
\BIBentrySTDinterwordspacing

\bibitem{chebotar2016path}
Y.~Chebotar, M.~Kalakrishnan, A.~Yahya, A.~Li, S.~Schaal, and S.~Levine, ``Path
  integral guided policy search,'' \emph{arXiv preprint arXiv:1610.00529},
  2016.

\bibitem{montgomery2016guided}
W.~H. Montgomery and S.~Levine, ``Guided policy search via approximate mirror
  descent,'' in \emph{Advances in Neural Information Processing Systems}, 2016,
  pp. 4008--4016.

\bibitem{stulp2012path}
F.~Stulp and O.~Sigaud, ``Path integral policy improvement with covariance
  matrix adaptation,'' \emph{ICML}, 2012.

\end{thebibliography}

\end{document}